\documentclass{article}

\usepackage[preprint]{neurips_2026}

\usepackage[utf8]{inputenc} 
\usepackage[T1]{fontenc}    
\usepackage{hyperref}       
\usepackage{url}            
\usepackage{booktabs}       
\usepackage{amsfonts}       
\usepackage{nicefrac}       
\usepackage{microtype}      
\usepackage{xcolor}         
\usepackage{multicol}

\usepackage{amsmath, amssymb, amsthm, bm}
\usepackage{graphicx}
\usepackage{pifont}     
\usepackage{hyperref}
\usepackage{algorithm}
\usepackage{algorithmic}
\usepackage{subcaption}
\usepackage{multirow}
\usepackage{enumitem}

\newtheorem{definition}{Definition}
\newtheorem{theorem}{Theorem}

\newcommand{\calM}{\mathcal{M}}

\newcommand{\until}[0]{\textbf{U}}
\newcommand{\eventually}[0]{\textbf{F}}
\newcommand{\always}[0]{\textbf{G}}
\newcommand{\embeddings}[0]{\mathcal{Z}}

\newcommand{\trace}[2]{\sigma_{#1}^{#2}}
\newcommand{\rob}[0]{\rho}
\newcommand{\bound}[0]{\textbf{b}}

\newcommand{\cmark}{\ding{51}}
\newcommand{\xmark}{\ding{55}}
\newcommand\tabletextsize{\small}

\title{Runtime Monitoring of
Perception-Based \\ Autonomous Systems via Embedding Temporal Logic}

\author{%
  Parv Kapoor\thanks{Indicates equal contribution} \\
  Software and Societal Systems Department \\
  Carnegie Mellon University \\
  \texttt{parvk@andrew.cmu.edu} \\
  \And
  Abigail Hammer$^*$ \\
  Software and Societal Systems Department \\
  Carnegie Mellon University \\
  \texttt{arhammer@andrew.cmu.edu} \\
  \AND
  Ashish Kapoor \\
  General Robotics \\
  \texttt{ashish@generalrobotics.company} \\
  \And
  Karen Leung \\
  Aeronautics and Astronautics Department \\
  University of Washington \\
  \texttt{kymleung@uw.edu} \\
  \And
  Eunsuk Kang \\
  Software and Societal Systems Department \\
  Carnegie Mellon University \\
  \texttt{eunsukk@andrew.cmu.edu} \\
}

\begin{document}

\maketitle

\begin{abstract}
Runtime monitoring of autonomous systems traditionally relies on mapping continuous sensor observations to discrete logical propositions defined over low-dimensional state variables. This abstraction breaks down in perception-driven settings, where such mappings require additional learned modules that are often computationally expensive, brittle, and semantically misaligned.
In this work, we propose \emph{Embedding Temporal Logic} (ETL), a temporal logic that performs monitoring directly in learned embedding spaces.
ETL defines predicates through distances between observed embeddings and target embeddings derived from reference observations. This formulation allows specifications to capture high-level perceptual concepts, such as similarity to visual goals or avoidance of semantic regions, that are difficult or impossible to express using traditional predicates.
By composing these predicates with temporal operators, ETL naturally expresses temporally extended and sequential perceptual behaviors. We introduce ETL monitors for evaluating specifications over bounded embedding traces, along with a conformal calibration procedure that provides reliable and safety-oriented predicate evaluation.
We evaluate our approach across multiple manipulation environments to show that ETL achieves strong empirical agreement with ground-truth semantics, including accurate monitoring of temporally composed behaviors.

\end{abstract}

\section{Introduction}
\label{sec:intro}

Modern autonomous systems, from self-driving vehicles to robotic manipulators, increasingly rely on learned representations for perception, prediction, and decision making~\citep{hafner2020dreamer,tdmpc2, zhou2025dinowm, kim24openvla, intelligence2025pi06vlalearnsexperience, ye2026worldactionmodelszeroshot}. These  representations allow autonomous systems to overcome the challenges of operating on explicit state space representations (such as object poses and velocities), which often require state estimation pipelines and auxiliary localization modules. We refer to systems that operate on learned representations as \emph{perception-based systems}. Perception-based systems map high-dimensional sensor streams such as images, video, or lidar to compact latent representations,
which are then consumed by downstream policies, planners, or world models \citep{nvidia2026worldsimulationvideofoundation, baniodeh2025scalinglawsmotionforecasting}. 

A promising approach to achieving high-assurance autonomy involves formally specifying desired properties of a system and applying techniques such as formal verification \citep{verification-survey2019} and runtime monitoring \citep{Maler2004MonitoringTP} to check whether the system satisfies these properties \citep{seshia-atva18}. In particular, \emph{runtime monitoring} has attracted  considerable interest as it can be deployed \emph{online} to provide rigorous guarantees about the system behavior without incurring the cost of an exhaustive offline analysis. A runtime monitor periodically evaluates the execution of a system and raises an alert when the system exhibits undesirable behavior~\citep{colombo2022runtime}. This ability to provide lightweight but rigorous online assurance has led to successful applications in domains such as autonomous vehicles~\citep{schon2026spatiotemporal}, drones~\citep{gu2023successful}, and robotic manipulators~\citep{8836933}.  

Runtime monitoring relies on the availability of \emph{formal specifications} that capture the desired properties of a system. For systems with low-dimensional state representations, specification notations such as \textit{signal temporal logic (STL)} \citep{Maler2004MonitoringTP} provide an expressive formalism to specify behavioral properties over logical \emph{predicates}. Each predicate encodes a condition over a state variable that can be evaluated as true or false at each step of execution, e.g., whether its position, velocity, or force is below or above a given threshold. 

However, for perception-based systems, writing such formal specifications over learned representations remains an open challenge \citep{seshia-atva18}. For these systems, low-dimensional state representations are often unavailable or require specialized, ad-hoc perception modules.
For example, translating a property such as ``the robot is near the obstacle'' or ``the gripper is holding the object'' into predicates requires either an additional  classifier or detector, or a handcrafted feature extractor tailored to the task (e.g., to detect whether the concept of the gripper ``holding'' an object is present in the current scene) \citep{doi:10.1177/02783649231223546}. Adding these modules can introduce new sources of brittleness, calibration errors, and domain dependence into the system. Worse, whenever the vocabulary of concepts for specification evolves (e.g., to also be able to express properties about the gripper ``dropping'' an object), it may be necessary to augment the existing perception modules or add new ones to support the new concepts.
Overall, there is a \emph{fundamental mismatch} between (i) the latent space over which typical perception systems operate and (ii) the low-dimensional state space over which specifications in existing temporal logic notations are expressed.

In this paper, we propose a new approach for formally specifying and monitoring the behavioral properties of perception-based autonomous systems. The key idea is to employ \emph{embeddings}, pretrained vector representations of observations, as a first-class concept in specifications, and express a property in
terms of distances between a \emph{target embedding} (an ideal representation of real-world concepts that the system interacts with) and an \emph{observed embedding} (a representation
generated by an encoder from a sensor observation during system execution).  The key insight is that \emph{pretrained encoders already embed semantic proximity in geometry} \citep{radford2021learning, oquab2024dinov2learningrobustvisual}: observations of semantically similar scenes map to nearby vectors in latent space. This makes perceptual properties directly expressible as geometric predicates; for instance, ``being near the obstacle'' can be represented as ``$\|z_t - z_{\mathrm{obstacle}}\|_2$ is small,'' where $z_{\mathrm{obstacle}}$ is an encoder representation of a reference image of that obstacle and $z_{\mathrm{t}}$ is an encoding of the current scene image. 
An expressive temporal logic specification can then be constructed by combining multiple embedding-based predicates and used as part of a runtime monitor to ensure that the system satisfies its desired property (e.g., ``If the gripper is holding an object, it will not drop the object until it is moved to a deposit box''). 

Although this idea is conceptually simple, making it useful for formal specification poses multiple challenges. First, there is the question of how target embeddings are generated: they can come from a reference image,  a demonstration, or a set of both, and different choices can induce meaningfully different predicates. Additionally, embeddings are learned, continuous, and model-dependent representations, and geometric proximity is not guaranteed to align perfectly with the logical distinctions required for monitoring. A central challenge, therefore, is to turn embedding-space similarity into a well-defined specification primitive: one must decide which geometric relationships correspond to predicate satisfaction, how to calibrate decision thresholds, and how these predicates are composed to create  system specifications. These issues make embedding-based specifications substantially more challenging than simply reusing learned features inside an existing monitor.

In this paper, we make the following four contributions: (i) we introduce \emph{Embedding Temporal Logic} (ETL), a temporal logic for specifying perceptual behaviors directly over observations (Section~\ref{sec:etl_semantics}); (ii) we formally define Boolean satisfaction semantics over bounded embedding traces, yielding an online monitor for perceptual specifications (Sections~\ref{sec:etl_semantics} and~\ref{sec:monitor_def}); (iii) we propose data-driven methods for calibrating embedding predicate thresholds, making them feasible for safety-oriented monitoring (Section~\ref{sec:threshold}); and (iv) we evaluate ETL-based monitors across navigation and manipulation domains, showing that they can faithfully monitor atomic and sequential perceptual behaviors across diverse environments (Section~\ref{sec:experiments}).

\section{Background and Related Work}
\label{sec:background}

\paragraph{Formal Specifications for Robotic Systems}
Temporal logics, such as Linear Temporal Logic (LTL), STL, and Metric Temporal Logic (MTL) have been used to formally verify  complex behaviors in cyber-physical and robotic systems.
These logics have been used for trajectory planning \citep{kress2009temporal, Sun2022MultiagentMP, leungbackpropagation}, reinforcement learning \citep{aksaray2016qlearning, alur2023policy, 10160953}, runtime monitoring \citep{bartocci_specificationbased_2018}, and adaptive control \citep{MPC-STL, doi:10.1146/annurev-control-053018-023717, Lindemann2019ControlBF, kapoor2025stlcg++}.
These logics can struggle with systems that rely on ML for perception, where input data can have a variable number of objects in frame and evolving bounding boxes.
Recently, Spatiotemporal Perception Logic (STPL) \citep{doi:10.1177/02783649231223546} was introduced, which combined Timed Quality Temporal Logic \citep{dokhanchi2018evaluating} with spatial logic and allows quantification over objects, as well as 2D and 3D spatial reasoning.

\paragraph{Pretrained Vision Encoders}
Recent advances in representation learning have produced pretrained vision encoders that are sufficiently expressive to serve as general-purpose perceptual representations across a wide range of visual domains \citep{oquab2024dinov2learningrobustvisual}. As a result, these models provide a practical foundation for extending temporal logic beyond low-dimensional state space representations. Pretrained vision encoders such as CLIP \citep{radford2021learning} and DINOv2 \citep{oquab2024dinov2learningrobustvisual} provide a shared embedding space in which perceptual similarity can be measured, making them a natural choice for defining specification predicates over observations.

\paragraph{Specification Based Runtime Monitoring}
Given a logical specification that is well-defined over bounded traces and encodes a desired system property, a runtime monitor is an online evaluation at each time step during an execution to assess whether the execution satisfies the given specification   \citep{Maler2004MonitoringTP, bartocci_specificationbased_2018}.
In practice, runtime monitoring often goes beyond Boolean verdicts and
employs quantitative semantics, such as \emph{robustness} measures in temporal
logic, which provide a real-valued signal indicating how much a trace
satisfies or violates the specification \citep{robustness}. These quantitative monitors are
particularly useful in continuous and stochastic systems, as they enable
graded feedback and can be integrated with optimization or control
algorithms for real-time decision making.

\paragraph{Conformal Prediction}
Conformal prediction \citep{10.5555/1062391} is a distribution-free calibration framework that turns a held-out calibration set into a finite-sample statistical guarantee under only an \textit{exchangeability} assumption. The exchangeability of data means that the calibration and test samples are identically distributed and order-independent. At a high level, it is a method for calibrating a model so that its predictions on unseen examples come with a reliability guarantee. 
We employ conformal prediction theory for calibrating thresholds for ETL predicates in Section \ref{sec:threshold}.

\section{Embedding Temporal Logic}
\label{sec:etl}

\begin{figure*}[t]
    \centering
    \includegraphics[width=\textwidth]{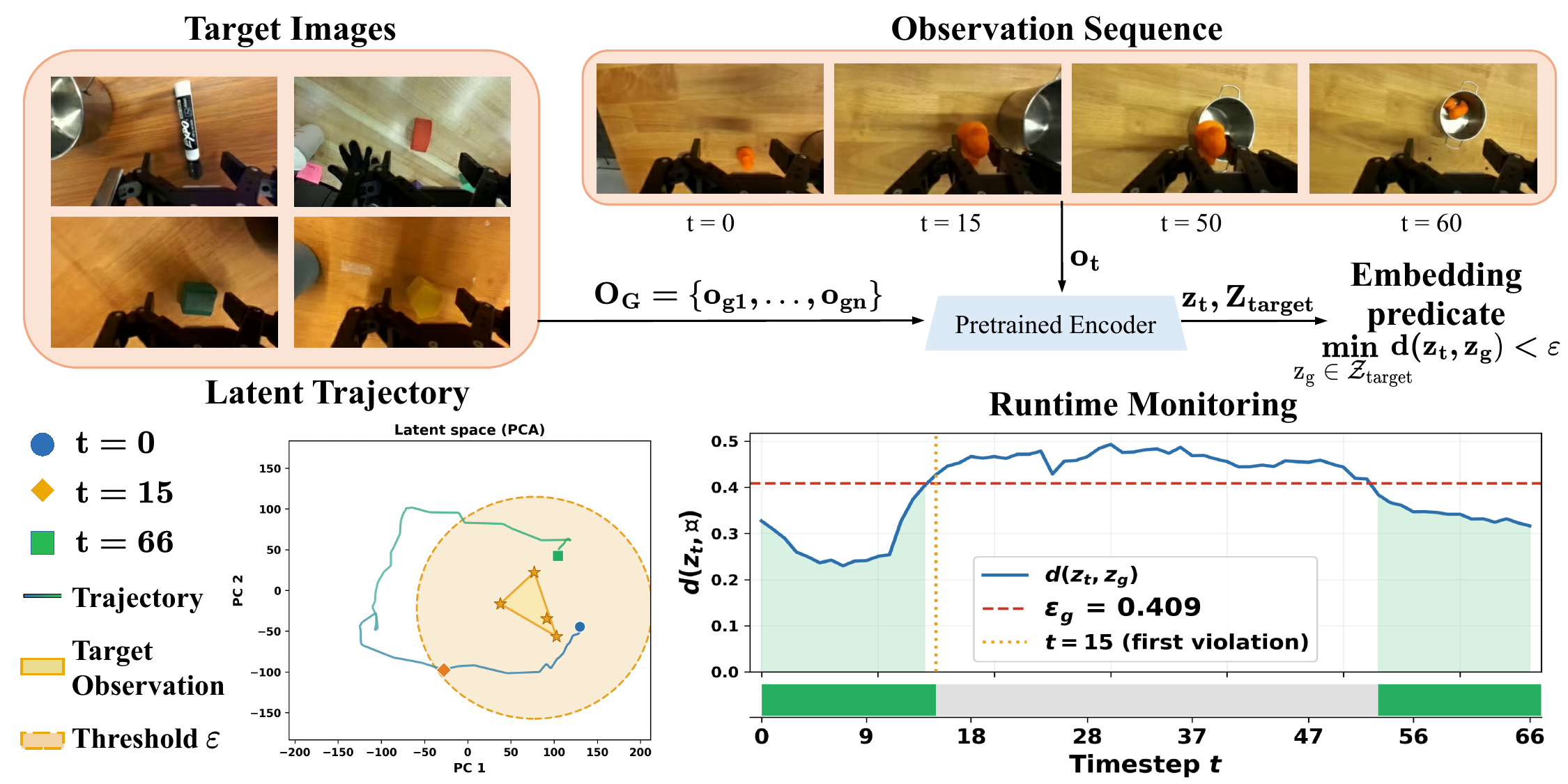}
\caption{
Overview of embedding-based runtime monitoring.
\textbf{Top:} Target observations $O_G=\{o_{g1},\dots,o_{gn}\}$ and the online observation $o_t$ are encoded by a pretrained vision encoder into target embeddings $z_G$ and the current embedding $z_t$. An embedding predicate then evaluates whether $z_t$ reaches the target set.
\textbf{Bottom left:} The embedding trajectory is projected onto its first two principal components, showing its evolution relative to the target embeddings.
\textbf{Bottom right:} Runtime monitoring computes the distance from $z_t$ to $z_G$ over time, thresholds it using $\epsilon$, and evaluates the temporal logic specification.
}
    \label{fig:etl_monitoring}
\end{figure*}

\subsection{Syntax and Semantics}
\label{sec:etl_semantics}

In our approach, a perception-based system is assumed to make an observation about the real world through a sensor (e.g., camera) at each step in its execution.
This observation is then passed through an encoder that translates the observation into an embedding. We formally model such a system as an \emph{Embedding Temporal Structure}.

\begin{definition}[Embedding Temporal Structure]\label{def:ets}
An \emph{Embedding Temporal Structure} is a tuple
\[
\calM \equiv (\mathcal{S}, \mathcal{O}, \embeddings{}, \phi_{obs}, \psi_{enc}, D_{\embeddings{}}, AP_z),
\]
where $\mathcal{S},\mathcal{O},\embeddings{}$ denote the sets of ground-truth states, observations, and
embedding spaces, respectively; $\phi_{obs} : \mathcal{S} \to \mathcal{O}$ is the observation function that maps the ground-truth states to the states that can be observed with the given sensor(s); $\psi_{enc}: \mathcal{O} \to \embeddings{}$ is the embedding function that converts an observation into an embedding;
$D_{\embeddings{}}$ is a set of admissible distance/similarity functions
$d:\embeddings{}\times\embeddings{}\to\mathbb{R}_{\ge0}$; and $AP_z$ is the set of embedding
predicates (Definition~\ref{def:etl_predicates}).
\end{definition}
Conceptually, $z$ is an approximation of a latent variable; i.e., the state of the world that is only indirectly observable by the system. 
The \emph{organization} of latent representations of observations within the embedding space $\embeddings{}$ is induced by the encoder $\psi_{enc}$ and the downstream models that subsequently consume this embedding for further tasks (e.g., object identification and prediction) and shape the embedding organization as part of their training.
This organization of embeddings in turn has a pronounced effect on the distances between embeddings. 

To reason about a system's temporal behavior, we must consider not only individual states 
but sequences of states over time. We therefore define an execution of the system and the corresponding 
embedding-space representation induced by the observation and encoder mappings.

\begin{definition}[Execution]\label{def:exec}
An execution of the system $\mathcal{M}$ is a finite or infinite sequence of
states
$
\varsigma = s_0, s_1, s_2, \ldots
$
such that each state $s_t$ belongs to the state space $\mathcal{S}$ for every
time index $t \in \mathbb{N}$.
\end{definition}

Each state in an execution can be mapped through $\phi_{obs}$ and $\psi_{enc}$,
yielding a corresponding representation in the embedding space.

\begin{definition}[Representation Map]\label{def:rep_map}
The mapping from the ground-truth states to embeddings is the \emph{representation map} $\eta \equiv  \psi_{enc} \circ  \phi_{obs}: S \to \embeddings{}$.
\end{definition}

\begin{definition}[Trace]\label{def:trace}
Given an execution $\varsigma = (s_i)_{i \in \mathbb{N}}$, the associated \emph{trace} is the sequence 
$\sigma = (z_i)_{i \in \mathbb{N}}$, where $z_i = \eta(s_i)$ for all $i \in \mathbb{N}$.
\end{definition}

\paragraph{Embedding Predicates}
Unlike classical temporal logics where atomic propositions are Boolean predicates over $S$, ETL predicates are defined over the embedding space  $\embeddings{}$.

\begin{definition}[Embedding Predicate] \label{def:etl_predicates}
An \emph{embedding predicate} $ap \in AP_z$ is a tuple
$
ap \equiv (\mathcal{Z}_{target}, d, \epsilon, \bowtie, a)
$
where: $\mathcal{Z}_{target} \subseteq \embeddings{}$ is a set of target embeddings;  $d \in D_{\embeddings{}}$, where $d: \embeddings{} \times \embeddings{} \rightarrow \mathbb{R}_{\ge 0}$, is a distance function; $\epsilon \in \mathbb{R}_{\geq 0}$ is a threshold; and $a$ is an aggregation operator (e.g., $\min$, $\max$).
\end{definition}

The set of target embeddings identifies sensor inputs that the system should either reach or avoid  in order for the system to either be considered safe or to have reached as a goal. 
Additional details for specifying target embeddings are discussed in Appendix \ref{sec:targets}.

We now define how an Embedding Predicate can be evaluated.

\begin{definition}[Predicate Satisfaction] \label{def:pred_sat}
    For a given embedding predicate $ap \in AP_z$ and embedding $z$, the evaluation of an embedding predicate is defined as:
\[
\delta_{ap}(z) = a(\{ d(z, z_g) \mid z_g \in \mathcal{Z}_{target} \})
\]
Then, the satisfaction of the predicate is the Boolean value determined by $\delta_{ap}(z) \bowtie \epsilon $.
\end{definition}

\paragraph{Example 1.}
Consider an embedding predicate $ap$ as illustrated in Figure~\ref{fig:etl_monitoring}, 
where the target images depict a desired visual concept (e.g.,  object not grasped) and are encoded into a target set 
$\mathcal{Z}_{target}$.
Given the embedding $z_t$ of the current observation, predicate evaluation measures the distance 
between $z_t$ and the target set. Let $d$ be the L2 distance, $a=\min$, and $\bowtie=\leq$. Then,
\[
\delta_{ap}(z_t)=\min_{z_g\in \mathcal{Z}_{target}} d(z_t,z_g).
\]
Using $a = \min$, we ensure that $\delta_{ap}(z_t)$ is the distance from the current observation to the closest target in the embedding space.
Then, the predicate is satisfied if $\delta_{ap}(z_t)\leq \epsilon$,
meaning that the current observation embedding lies sufficiently close to at least one target embedding, 
indicating that the desired visual concept is present.

Now that we have defined embedding predicates, we  next define the syntax and semantics of ETL to reason over sequences of embeddings over time.

\begin{definition}[ETL Syntax] \label{def:etl_syntax}
    The \emph{ETL syntax} of formula $\varphi$ is recursively defined as follows:
\begin{align*}
\varphi ::= ap \;|\; \neg \varphi \;|\; \varphi_1 \land \varphi_2 \;|\; \varphi_1 \until{} \varphi_2
\end{align*}
\end{definition}

As in LTL, the until operator
$\until{}$ can be used to express the \emph{eventually}
($\eventually{}$) operator and \emph{always} ($\always{}$) operator:
$\eventually{} \varphi = \texttt{True} \until{} \varphi$
and
$\always{} \varphi = \neg \eventually{} \neg \varphi$. 

\begin{definition}[ETL Semantics] \label{def:etl_semantics}
    For a given trace $\sigma$, the \emph{ETL semantics} for timestep $i$ are defined similarly to those of LTL, but over embedding traces: 
\begin{align*}
\centering
    &\trace{}{}, i \models ap \iff \delta_{ap}(z_i) \bowtie \epsilon \\
    &\trace{}{}, i \models \neg \varphi \iff \trace{}{}, i \not\models \varphi \\
    &\trace{}{}, i \models \varphi_1 \land \varphi_2 \iff \trace{}{}, i \models \varphi_1 \text{ and } \trace{}{}, i \models \varphi_2 \\
    &\trace{}{}, i \models \varphi_1 \until{} \varphi_2 \iff \exists j \geq i \text{ such that } \trace{}{}, j \models \varphi_2  \text{ and } \forall k \in [i,j), \trace{}{}, k \models \varphi_1
\end{align*}
\end{definition}

\paragraph{Robustness of ETL}
Certain types of temporal logics, such as STL \citep{Maler2004MonitoringTP}, allow for a quantitative notion of satisfaction, called \emph{robustness} \citep{robustness}, that represents the degree to which the system satisfies or violates a specification. To enable ETL to be used for quantitative monitoring, we also introduce a notion of robustness for ETL. For the sake of  brevity, the formal semantics for robustness are provided in Appendix \ref{app:quant}.

\paragraph{Example 2}
Reusing the embedding predicate $ap$ as defined in Example 1, consider the ETL specification
$
\varphi = \always(ap)
$,
which requires that the visual concept encoded by predicate $ap$ is always observed. Suppose the system generates the embedding trace
$
\trace{}{} = z_{12}, z_{13}, z_{14}, z_{15},
$
with minimum distances to $\mathcal{Z}_{target}$ as
$
[\delta_{ap}(z_t)]_{t=12}^{15}
=
[0.327,\; 0.374,\; 0.403,\; 0.427].
$

Using the robustness semantics for predicates,
$
\rho(ap, \trace{}{}, t, 3) = \epsilon - \delta_{ap}(z_t),
$
and setting $\epsilon = \epsilon_g = 0.409$, the predicate robustness at each timestep is:
$
[0.082,\; 0.035,\; 0.006,\; -0.019].
$
The robustness of the temporal specification is then
$
\rho(\varphi,\trace{}{},0,3)
=
\min_{t\in[0,3]} \rho(ap,\trace{}{},t,3)
=
-0.019.
$ Intuitively, although the system remains close to the object not being grasped for most of the window, the specification is violated because at timestep $15$ the embedding moves outside the threshold $\epsilon$, corresponding to the frame t=15 in Figure~\ref{fig:etl_monitoring} where the object is grasped.

\subsection{ETL Specification Monitors}
\label{sec:monitor_def}

An ETL monitor evaluates the satisfaction of a specification over an embedding trace for a given system.
ETL monitors are defined only over the finite trace of a system in order to be evaluated during the execution of the system, and thus are only defined for specifications that are well-defined over finite traces.

\begin{definition}[Finite Trace] \label{def:trace_finite}
    For a given trace $\trace{}{}$, a \emph{finite trace} $\trace{\le t}{}$ is a trace consisting of the first $t+1$ states  of $\trace{}{}$.
\end{definition}
We refer to a finite trace as a trace when clear from context. An ETL monitor is formally defined over a finite trace of embeddings 
by the following definition.

\begin{definition}[ETL Monitor] \label{def:etl_monitor}
For a trace $\trace{\le t}{}$, system $\calM$, and specification $\varphi$, an \emph{ETL Monitor} is
$M_\varphi(\trace{\le t}{})=(r^\varphi_0,\ldots,r^\varphi_t)\in\{-1,+1\}^{t+1}$,
where $r^\varphi_i=\operatorname{sgn}(\rho(\varphi,\trace{\le i}{},0,i))$ for each $0\le i\le t$, with $\operatorname{sgn}(x)=+1$ if $x\ge0$ and $-1$ otherwise.
\end{definition}

Intuitively, when the monitor is positive for a timestep, the system satisfies the ETL specification at that timestep. 
However, when the robustness of the system becomes negative at timestep $i$, the monitor raises an alert to indicate that the observed  execution violates $\varphi$ over the prefix $[0,i]$.

\paragraph{Semantic Correctness}
We denote an ETL monitor  to be \emph{semantically correct} iff it is equivalent with evaluation over ground-truth executions.  A formal definition of semantic correctness is provided in Appendix~\ref{appx:semantic_corr}. This notion of semantic correctness provides the basis for evaluating ETL monitors in practice. In our experiments (Section~\ref{sec:experiments}), we approximate semantic correctness by comparing the outputs of the ETL monitor against ground-truth monitors derived from state-based specifications.

\subsection{Constructing ETL Specifications in Practice}
\label{sec:practice}
Utilizing the formal semantics presented in Section \ref{sec:etl_semantics} requires several design choices for concrete instantiation: one must determine how target embeddings are specified, how observations are mapped into the representation space, which distance function is used to compare embeddings, and how predicate thresholds are calibrated in order to align with the intended semantic concept.
In practice, target embeddings should be selected by the engineer in the same format as the sensor input; e.g., for a camera, a target would be provided as a reference image.
The distance function selection  depends on the training objective of the encoder and ideally should align with the organization of the embedded space.
Additional discussion on these design decisions can be found in Appendix~\ref{app:practice}.

\section{Threshold Calibration for ETL Predicates}
\label{sec:threshold}
The preceding section described how ETL predicates are instantiated in practice through target embeddings, encoders, and distance functions. We now turn to threshold calibration. Recall from Definition~\ref{def:etl_predicates} that ETL predicates are evaluated by comparing embedding distances against a threshold~$\epsilon$. These thresholds play a critical role: they determine when a continuous similarity measure corresponds to predicate satisfaction.  
As a result, ETL specifications are not only parameterized by the encoder and distance function, but also by calibrated thresholds   
that define when a predicate holds.
We propose two approaches to calibrate thresholds.
For formal definitions of the thresholds, we refer readers to Appendix \ref{sec:appendix_threshold}.

\subsection{F1-Optimal Threshold}
We calibrate each predicate threshold on a held-out set, $\mathcal{D}_{cal}$, of trajectories with ground-truth labels per timestep. For each timestep $t$, we compute the embedding distance $d_t$ to the target embedding and, for a candidate threshold $\epsilon_1$, predict that the predicate holds whenever $d_t \le \epsilon_1$. We then search over possible threshold values based on the observed calibration distances and select the threshold, $\epsilon_{F1}$, that maximizes the F1 score with respect to the ground-truth labels.
Intuitively, this chooses the threshold that best matches the intended concept by balancing false positives and false negatives on the calibration data.

\subsection{Conformal Threshold with Recall Guarantee}
In safety-critical monitoring, missing a true positive event is often more costly than raising a false alarm. 
To bias threshold selection toward high recall, we compute an alternative threshold using \emph{split conformal prediction}, $\epsilon_{\mathrm{CP}}$ \citep{lei2017distributionfreepredictiveinferenceregression, 10.1561/2200000101}.   
We employ conformal prediction as it allows us to make no assumptions about the underlying distribution of embedding distances and provides a guarantee that transfers directly to deployment-time predicate evaluation.

In this work, we propose conformal \emph{ETL predicate calibration} by adapting conformal prediction to select predicate thresholds from embedding distance scores. 
Let $\varsigma^1,\dots,\varsigma^{n_{cal}}$ be calibration demonstrations, disjoint from training, with $z_g$ denoting the target latent embedding with a semantically corresponding ground-truth specification $\omega$.
For each calibration, $\varsigma^i$, a score is computed based on the maximum distance from the frame to the target such that the target satisfies $\omega$.
Then, for a user-given error level $\alpha\in(0,1)$, we sort the calibration scores, compute $k = \lceil(1-\alpha)(n_{cal}+1)\rceil$ and select $\epsilon_{CP}$ to be the k-th smallest score.
The formal definition and proof for conformal recall guarantees based on calibration demonstrations can be found in Appendix \ref{sec:appendix_threshold}.

 \section{Evaluation}
\label{sec:experiments}

We evaluate the semantic correctness of ETL monitors by comparing
embedding predicate outputs against ground-truth propositions. 
\footnote{Artifacts for reproducibility are provided at \url{https://github.com/ETLMonitoringAuthors/ETLMonitoring}} 
All experiments were run on a compute cluster using two NVIDIA GeForce RTX 5090 GPUs, each with 34.2 GB of memory. 

\subsection{Simple Navigation Dubins Car}
\label{sec:dubins}
First, we evaluate ETL specification based runtime monitoring in a two dimensional navigation task where privileged information about the state space is available, similar to the one proposed in   \citet{anysafe}. This fully observable setup allows direct comparison between embedding-based predicates and ground-truth specifications. 
\paragraph{Setup}
The navigation task is defined for a robot that respects discrete-time Dubins car dynamics in a controlled environment.
We generate $N=100$ trajectories using a feedback controller with obstacle avoidance. For each task, we construct pairs of equivalent specifications in the ground-truth state space, $\omega_i$, and in the embedding space, $\varphi_i$.
We utilize the encoder from the world model used in \citet{anysafe}, which is based on Dreamer \citep{Hafner2025}. The encoder produces a task-relevant representation space in which distances reflect semantic similarity between observations. We evaluate four specification patterns: \emph{Reach} ($\eventually A$), \emph{Avoid} ($\always \neg C$), \emph{Reach-Avoid} ($\eventually A \land \always \neg C$), and \emph{Sequential} ($\eventually(A \land \eventually B)$).
$A$, $B$, and $C$ denote reaching a goal in the top-right, the top-left, and the bottom-right corners of the environment respectively.
For visualization, see the goals in Figure~\ref{fig:dubins_traces}.
ETL monitoring is evaluated by comparing the satisfaction of each embedding-based specification $\varphi$ with the satisfaction of corresponding ground-truth specifications $\omega$ on the respective latent and state traces for each timestep of each trace.
We report precision, recall, and F1 to assess how closely ETL predicates match ground-truth events. For temporal specifications, we measure trajectory-level satisfaction agreement with the ground-truth monitor; for sequential specifications, we also report ordering accuracy of detected subgoals.
Additional details   can be found in Appendix~\ref{sec:appendix_dubins_setup}.

\paragraph{Results}
We now discuss our findings; a tabularized version of results can be found in Appendix~\ref{sec:appendix_dubins}.

\subparagraph{Can ETL monitors achieve semantic correctness as defined in Definition 15 with respect to ground-truth specifications in a controlled environment?} ETL shows strong 
{semantic correctness} for atomic predicates, visualized in Figure~\ref{fig:dubins_traces}. Across reach, avoid, and reach-avoid specifications, the F1-optimal monitor achieves F1 scores of $0.80$--$0.85$ with agreement above $96\%$, indicating that embedding predicates closely match the corresponding state-based events. This implies that embedding-space predicates can serve as faithful proxies for state-based propositions.
Additionally, for the sequential specification $A \rightarrow B$, the monitor achieves $100\%$ precision, recall, and ordering agreement at the episode level. Thus, ETL is not limited to detecting isolated semantic events; \emph{ETL can also correctly track their ordering over time.}

\subparagraph{Does threshold calibration produce the intended precision--recall tradeoff for safety-oriented monitoring?} Relative to the F1-optimal threshold, $\epsilon_{CP}$ increases recall for reach (from $0.83$ to $0.93$) while incurring only a modest drop in precision (from $0.87$ to $0.79$), with essentially no change in agreement. This shows that conformal calibration provides a practical safety-oriented operating point: it makes the monitor more conservative against missed detections without substantially changing overall semantic alignment.

Overall, the Dubins results show that ETL monitors align closely with state-based specification monitors, support monitoring of temporally composed behaviors, and exhibit a clear precision--recall tradeoff under threshold calibration.

\begin{figure*}[t]
    \centering

    \begin{subfigure}[t]{0.92\textwidth}
        \centering
        \includegraphics[width=\linewidth]{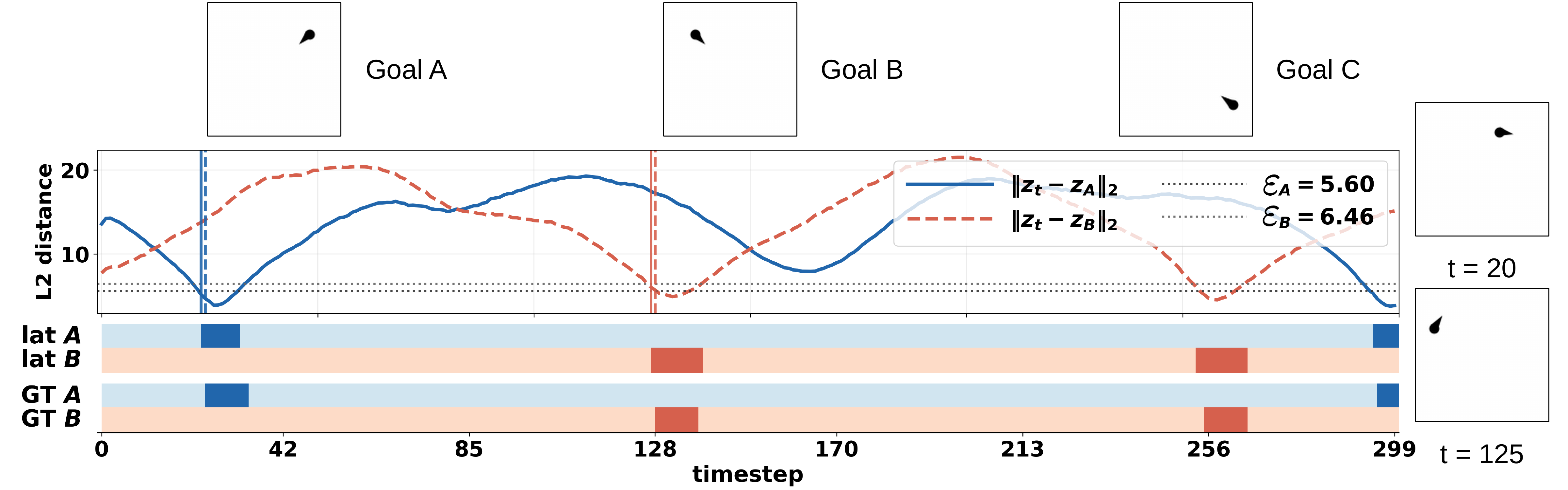}
        \caption{Boolean predicate traces on the Dubins car. Goals $A$, $B$, and $C$ are visualized along top of the image; the graph plots the L2 distance between embeddings and along with the F1-optimal thresholds. The bars along the bottom show when the latent and ground-truth predicates are satisfied (darker bars); two observation states are displayed on the left of the graph.}
        \label{fig:dubins_traces}
    \end{subfigure}

    \vspace{0.8em}

    \begin{subfigure}[t]{0.52\textwidth}
        \centering
        \includegraphics[width=\linewidth]{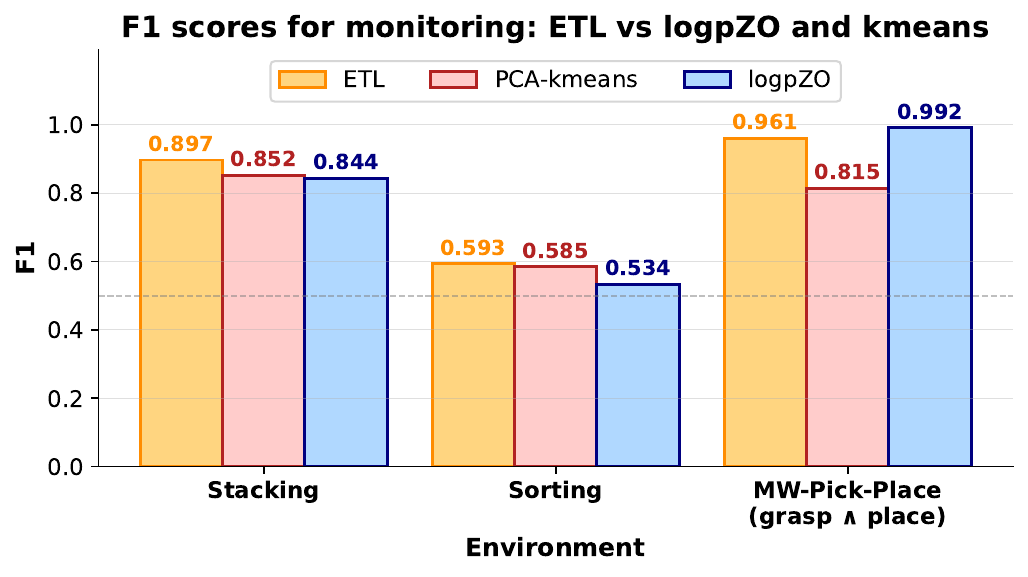}
        \caption{Predicate-monitoring F1 scores across domains.}
        \label{fig:f1_results}
    \end{subfigure}
    \hfill
    \begin{subfigure}[t]{0.47\textwidth}
        \centering
        \includegraphics[
            width=\linewidth,
            trim={0.1cm 0.2cm 0.4cm 0.2cm},
            clip
        ]{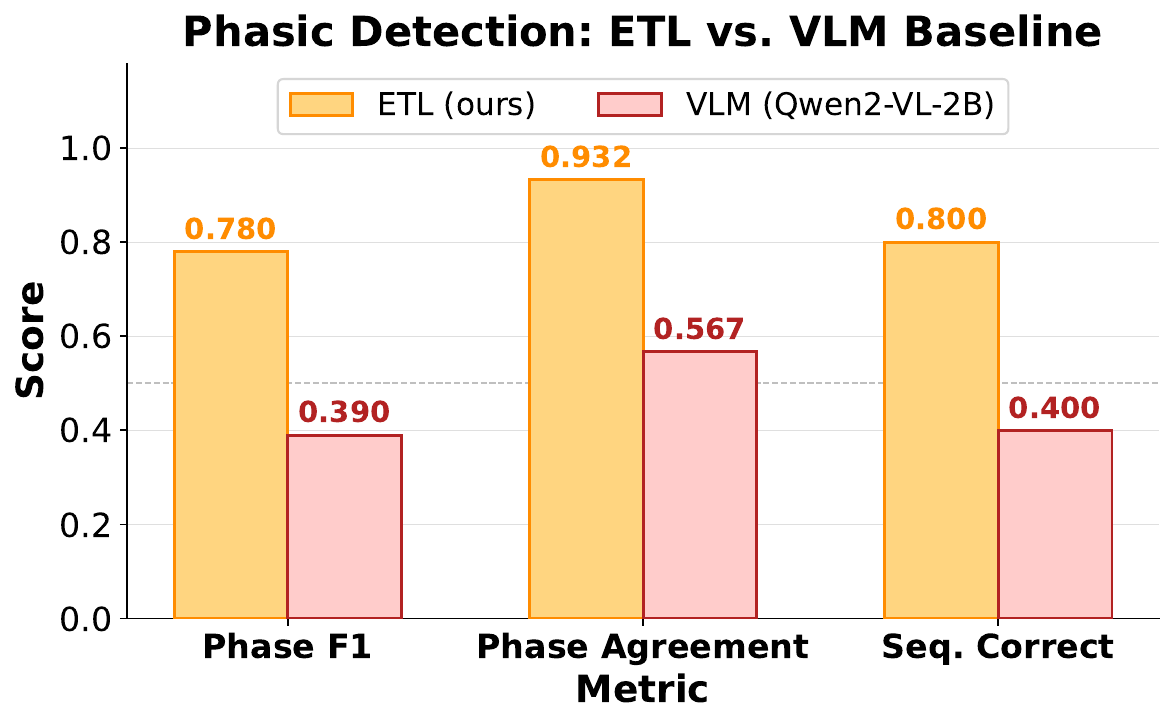}
        \caption{Comparison of ETL and Qwen2-VL-2B on phasic DROID episodes.}
        \label{fig:phasic_droid}
    \end{subfigure}

    \caption{
    Qualitative and quantitative ETL results.
    }
    \label{fig:etl_results_combined}
\end{figure*}
\subsection{Simulated Manipulation Tasks}
\label{sec:manip}
To assess whether ETL remains effective beyond the controlled navigation domain, we evaluate it for simulated contact-rich manipulation tasks with richer visual scenes, complex object interactions, and additional distractor objects that make perceptual monitoring harder. We consider two environments with complementary task structure: D3IL \citep{jia2024towards} and MetaWorld \citep{Hansen2025Newt}. For D3IL we consider two complex tasks: \textsc{Sorting} requires a Franka robot to push two blocks into their corresponding color-matching target boxes, while \textsc{Stacking} requires the robot to arrange colored blocks in a target region. For MetaWorld, we evaluate ETL on robotic-arm pick-and-place tasks with sequential grasp and place subgoals.  Unlike single-goal manipulation tasks, these benchmarks expose phase-like progress structures, making them a natural testbed for evaluating whether ETL predicates can monitor task-relevant semantic milestones in  rich manipulation settings. 

\paragraph{Setup}
We compare our ETL-based monitoring approach against recent embedding-based monitoring baselines: PCA-kmeans \citep{liu2024multitaskinteractiverobotfleet} and logpZO \citep{xu2023faildetect}. PCA-kmeans clusters principal components of successful observation embeddings and scores a new observation by its distance to the nearest cluster center, while logpZO fits a flow-matching density model over observation embeddings and uses the inferred latent norm as an uncertainty score. These baselines are the most direct comparison for ETL because they operate on observation embeddings and detect distributional deviations. We evaluate monitoring performance by comparing predicted satisfaction or violation labels against ground-truth labels derived from simulator rewards and state variables. Additional details regarding our experimental setup are provided in Appendix ~\ref{appx:tab:manip}.

\paragraph{Results: Can ETL outperform embedding-based monitoring baselines on simulated manipulation tasks?}
Figure~\ref{fig:f1_results} shows that ETL matches or exceeds the observation embedding baselines on average across the three simulated manipulation environments. Using $\epsilon_{F1}$, ETL achieves the highest average F1 score of $0.817$, compared with $0.790$ for logpZO and $0.751$ for PCA-kmeans. ETL improves over the strongest baseline on \textsc{Stacking} ($0.897$ vs. $0.852$) and slightly outperforms PCA-kmeans on \textsc{Sorting} ($0.593$ vs. $0.585$). \textsc{Sorting} is the hardest setting for all methods, suggesting that its OOD shifts induce more ambiguous changes in the embedding space. On MetaWorld pick-place-wall, logpZO reaches near-saturated performance ($0.992$ F1), indicating that the predicate boundary aligns closely with the support of successful embeddings. ETL remains competitive at $0.961$ F1, while PCA-kmeans lags behind at $0.815$.

\subsection{Real World  Evaluation via DROID}\label{sec:eval:droid}
Finally, we evaluate ETL on real-world robot data to assess its effectiveness for monitoring in-the-wild temporally extended manipulation behaviors.  
To this end, we utilize the manipulation dataset DROID  \citep{khazatsky2024droid}, which contains large-scale, diverse, and compositional manipulation demonstrations in more realistic settings.

\label{sec:real_world}

\paragraph{Setup}
We analyze manipulation episodes with compositional, multi-phase structure. We instantiate ETL specifications over phase-based predicates, where each phase corresponds to a contiguous interaction window in the demonstration. 
This setup allows us to evaluate how ETL monitoring performs over real-world manipulation episodes in comparison to a Vision Language Model (VLM)-based baseline monitor. For evaluation, we compute F1 score, agreement, and sequential ordering accuracy over phase-based predicates. For our baseline, we use a state-of-the-art VLM: Qwen2-VL \citep{wang2024qwen2vlenhancingvisionlanguagemodels}.
For each phase, the VLM is prompted with the ground-truth desired behavior and phase frame to determine if a task was completed; the response is distilled to a Boolean value, which is then used to compute frame-level F1, agreement, and sequential ordering accuracy for comparison with the ETL monitor.

\paragraph{Results: Can ETL accurately monitor perceptual specifications on real-world manipulation data in comparison to a  VLM-based baseline for monitoring phasic manipulation behaviors?}
ETL achieves a mean F1 score of $0.813$ and mean agreement of $0.940$, whereas the Qwen2-VL baseline reaches a mean F1 score of $0.390$ and mean agreement of $0.567$.
Though ETL and Qwen2-VL perform competitively on one of the five tasks, these results demonstrate that ETL generalizes to diverse, unstructured manipulation scenes, and is markedly more reliable overall, especially on phases that are visually similar but semantically distinct.
Additionally, ETL correctly identifies that sequential phasic ordering in $4/5$ episodes (Figure \ref{fig:phasic_droid}), indicating it can monitor compositional task structure beyond isolated phase detection. The single failure occurs in an episode with low inter-phase separability, suggesting that monitoring performance degrades when distinct phases occupy overlapping regions of the latent space.
Tabularized results can be found in Appendix~\ref{sec:appendix_droid}.

Overall, these results show that ETL extends beyond simulation to real-world manipulation data. ETL successfully monitors temporal and sequential task structure across demonstrations. Moreover, compared to a VLM-based baseline, ETL provides substantially more reliable monitoring of phasic behaviors without requiring explicit language supervision.

\section{Conclusion and Limitations}
\label{sec:conclusion}
In this work we propose a novel temporal logic (ETL) defined over the embedding space of pretrained encoders for perception-based autonomous systems.
Our experiments support three main conclusions.
First, embedding predicates can faithfully approximate ground-truth propositions across 
controlled navigation and manipulation tasks.
Second, ETL supports monitoring temporally extended properties: sequential specifications are accurately tracked across all evaluated environments.
Third, threshold calibration provides a meaningful operating tradeoff: the F1-optimal threshold yields the strongest overall alignment with ground truth, while the conformal threshold prioritizes high recall with a distribution-free guarantee.

The current approach has two main limitations: latent predicates are not yet fully interpretable in human-understandable terms, and monitoring performance depends on whether task-relevant semantic concepts are well separated in the encoder’s representation. These limitations point to promising directions for future work, including more transparent predicate explanations, encoder selection and adaptation for improved semantic separation  \cite{zarlenga2022conceptembeddingmodelsaccuracyexplainability}, temporal abstraction over subtask boundaries \cite{wang2026temporalstraighteninglatentplanning}, and adaptive online thresholding  \cite{areces2025online}.

\small
\bibliographystyle{plainnat}
\bibliography{bibs/etl_old, bibs/sources}

\appendix
\section{Constructing ETL Specifications in Practice}
\label{app:practice}

The formal semantics proposed in the preceding section define ETL as a specification language over embedding traces. 
To use ETL in practice, however, several design choices must be instantiated concretely: 
one must determine how target embeddings are specified, how observations are mapped into the representation space, 
which distance function is used to compare embeddings, and how predicate thresholds are calibrated so that they align with the intended semantic concept. In this section, we first outline how target embeddings are specified. Then, we highlight the different choices of encoders and distance functions. 

\subsection{Specifying Target Embeddings} \label{sec:targets}
ETL is specified using predicates over \emph{target embeddings} that correspond to parts of the physical world against which current observations are evaluated. 
In practice, the specifier (e.g., a system engineer) would not directly specify the mathematical representations of embeddings.
Instead, the targets would be provided in the same format as the sensor's input, e.g., for a camera, a target would be provided as a reference image, that is then translated into the target embeddings via the pretrained encoder.  This allows specifications such as \emph{“eventually reach a state similar to this image”} or \emph{“always avoid states resembling fire”} without requiring explicit symbolic labels.
By enabling specification directly over learned representations rather than explicit symbolic labels, our approach removes the need to manually define a finite predicate vocabulary from observations, a process that has traditionally depended on substantial expert knowledge and task-specific engineering in robotics. 

\subsection{Choice of Encoders and Distance functions}
The choice of distance function is another key practical consideration in evaluating ETL satisfaction. Distances over pretrained image embeddings capture perceptual similarity, but lack temporal context because these encoders are trained on static observations. World models mitigate this limitation by mapping observations into a latent space that evolves as a function of past states and actions. When trained with reconstruction-based objectives, such models often preserve much of the geometry of the original embedding space while enriching it with temporal structure. 
Consequently, similarity relationships defined over observation embeddings can often be carried over to the latent space in a meaningful way.

Accordingly, the choice of 
$d$ should align with the geometry induced by the representation. For instance, cosine distance is a natural choice for contrastive embeddings, whereas 
L2 distance is often more appropriate for reconstruction-based latent representations. More generally, the selected distance should be one that best reflects semantic similarity in the underlying space. We empirically study the effect of this design choice in Section~\ref{sec:experiments}.

\section{Quantitative semantics of ETL}
\label{app:quant}

\begin{definition}[ETL Robustness] \label{def:etl_quant}
For a trace $\trace{}{}$, the \emph{robustness of an ETL formula} at timestep $i$ is defined as follows:
\begin{eqnarray*}
  \rob(ap, \trace{}{}, i, \bound{}) & = & 
  \begin{cases}
    \epsilon - \delta_{ap}(z_i) & \bowtie \in \{\leq, <\} \\
    \delta_{ap}(z_i) - \epsilon &  \bowtie \in \{\geq, >\}
  \end{cases} \\
  \rob(\neg \varphi, \trace{}{}, i, \bound{}) & = & - \rob(\varphi, \trace{}{}, i, \bound{})\\
  \rob(\varphi_1 \land \varphi_2, \trace{}{}, i, \bound{}) & = &
  \min\big(\rob(\varphi_1, \trace{}{}, i, \bound{}), \rob(\varphi_2, \trace{}{}, i, \bound{})\big) \\
  \rob(\always{} \varphi, \trace{}{}, i, \bound{}) & = & \textbf{inf}_{k \in [i, \bound{}]} \rob(\varphi, \trace{}{}, k, \bound{}) \\
  \rob(\eventually{} \varphi, \trace{}{}, i, \bound{}) & = & \textbf{sup}_{k \in [i, \bound{}]} \rob(\varphi, \trace{}{}, k, \bound{})
\end{eqnarray*}
where \textbf{inf} and \textbf{sup} are the infimum and supremum operators, respectively. Note that the computation of the satisfaction score is restricted to a subsequence of $\trace{}{}$ between $i$ to $\bound{}$ (i.e., $z_i, z_{i+1},...., z_{\bound{}})$. The bound \bound{} may be determined by, for example, the planning horizon used by a planner (i.e., the length of action sequence used by the planner for behavioral prediction). For brevity, the definition for the until (\until{}) operator is omitted, but similar to the one described in \citet{robustness}. 
\end{definition}

\section{Semantic Correctness of ETL Monitors}\label{appx:semantic_corr}

To compare ETL monitors  to ground-truth executions, we define a ground-truth monitor over finite executions.

\begin{definition}[Finite Execution] \label{def:exec_finite}
    For a given execution $\varsigma$, a \emph{finite execution} $\varsigma_{\le t}$ is an execution  consisting of the first $t+1$ states  of $\varsigma$.
\end{definition}
We refer to a finite execution as an execution  when  clear from context.

\begin{definition}[Ground-Truth Monitor]
\label{def:gt_monitor}
Let $\omega$ be a temporal logic formula over \emph{symbolic} predicates, and let $\varsigma_{\le t}$ be a finite execution prefix. The \emph{ground-truth monitor} for $\omega$ on $\varsigma_{\le t}$ is the binary-valued trace
\[
GT_\omega(\varsigma_{\le t})
=
\bigl(
b^\omega_0,\,
b^\omega_1,\,
\dots,\,
b^\omega_t
\bigr)
\in \{-1,+1\}^{t+1},
\]
where, for each prefix $\varsigma_{\le i}$ with $0 \le i \le t$,
\[
b^\omega_i
=
\begin{cases}
+1 & \text{if } \varsigma_{\le i} \models \omega,\\
-1 & \text{otherwise.}
\end{cases}
\]
Here, satisfaction $\varsigma_{\le i}  \models \omega$  is defined by the semantics of the underlying temporal logic over symbolic predicates.
\end{definition}

We define the semantic correctness of monitor $M$ using the following definitions. 
\begin{definition}[Semantic Correspondence]
\label{def:semantic_correspondence}
Let $M$ be an embedding temporal structure with representation map
$\eta = \psi_{\mathrm{enc}} \circ \phi_{\mathrm{obs}}$. Let $\omega$ be a
specification over ground-truth state trajectories $\varsigma_{\le t}$ and let $\varphi$ be an ETL
specification over finite embedding traces $\sigma_{\le t}$. We say that $\varphi$
\emph{corresponds semantically} to $\omega$ with respect to $M$ if, for every
finite execution prefix $\varsigma_{\le t} = (s_0,\dots,s_t)$, letting
$
\sigma_{\le t} = (\eta(s_0),\dots,\eta(s_t)),
$
we have
$
\sigma_{\le t} \models \varphi
\iff
\varsigma_{\le t} \models \omega.
$
\end{definition}

Intuitively, we say that $\varphi$ is a latent space specification \textit{semantically corresponds} 
with the
ground-truth specification $\omega$ if both are intended to capture the same
behavioral property, with $\omega$ defined over state-based predicates and
$\varphi$ defined over embedding-space predicates.
\begin{definition}[Semantic Correctness of an ETL Monitor]
\label{def:semantic_correctness}
Let $\omega$ be a ground-truth specification and let $\varphi$ be an ETL
specification that corresponds semantically to $\omega$. 
An ETL monitor $M$ is \emph{semantically correct} with respect to
$(\varphi,\omega)$ if, for every execution prefix,
$
M_\varphi(\sigma_{\le t}) = GT_\omega(\varsigma_{\le t}).
$
\end{definition}

\section{Formal Definitions of Threshold Calibration}
\label{sec:appendix_threshold}
We provide additional details and intuitive understanding of thresholds and their calibration for ETL monitoring.

Intuitively, $\epsilon$ defines the \emph{boundary of a concept} in the representation space (e.g., how close an observation must be to a goal embedding to count as ``reached''). Unlike state-based predicates, thresholds in embedding spaces depend on the geometry induced by the encoder and the choice of distance function. Poorly chosen thresholds can lead to high false negatives (overly strict predicates) or false positives (overly permissive predicates), making calibration essential.

We propose a data-driven approach to calibrate thresholds for ETL predicates. 
We assume access to a dataset of trajectories (e.g., simulator rollouts or demonstrations) 
with ground-truth signals  $\varsigma_{\le t}$ that satisfy   
a TL specification $\omega$ at timestep $t$ in the underlying state space. Here, $\omega$ is a TL specification that semantically  corresponds to the ETL specification
being instantiated. Then, each trajectory is mapped to an embedding trace using the representation map $\eta$, yielding embeddings $z_t = \eta(s_t)$. From these embeddings, we compute distances to the target set as 
$d_t = d_u(z_t, T_u)$. We then select thresholds using a held-out calibration set of size $n_{\text{cal}}$. 

\subsection{F1-Optimal Threshold}

\begin{definition}[F1-Optimal Threshold]
\label{def:thresh_f1}
Let $\mathcal{D}_{\mathrm{cal}}=\{(d_t, y_t)\}_{t=1}^{N}$ be the calibration set, where $d_t \in \mathbb{R}_{\ge 0}$ is the embedding distance at timestep $t$ and $y_t \in \{0,1\}$ is the corresponding ground-truth predicate label. For any candidate threshold $\epsilon \in \mathbb{R}_{\ge 0}$, define the induced prediction 
\[
\hat{y}_t(\epsilon) = \mathbf{1}[d_t \le \epsilon].
\]
Let
\[
\mathrm{F1}(\epsilon)
=
\frac{2 \, \mathrm{TP}(\epsilon)}
{2 \, \mathrm{TP}(\epsilon) + \mathrm{FP}(\epsilon) + \mathrm{FN}(\epsilon)},
\]
where $\mathrm{TP}(\epsilon)$, $\mathrm{FP}(\epsilon)$, and $\mathrm{FN}(\epsilon)$ denote the numbers of true positives, false positives, and false negatives, respectively, obtained by comparing $\hat{y}_t(\epsilon)$ against $y_t$ over $\mathcal{D}_{\mathrm{cal}}$.
 
The \emph{F1-optimal threshold} is defined as
\[
\epsilon_{F1} \in \arg\max_{\epsilon \in \{d_1,\dots,d_N\}} \mathrm{F1}(\epsilon).
\]
If multiple thresholds achieve the same maximum F1 score, ties may be broken arbitrarily.
\end{definition}

\subsection{Conformal Threshold with Recall Guarantee}

In this work, we propose conformal \emph{ETL predicate calibration} by adapting conformal prediction to select predicate thresholds from embedding distance scores. 
Here, the goal is not to produce a prediction interval around a model output, but to calibrate the threshold used for ETL predicate evaluation.
We do this in two stages. Within each calibration trajectory, we compute a single score from all satisfying timesteps by taking the largest embedding distance among the frames that satisfy the predicate. 
This score represents the \emph{hardest positive case} in that trajectory: any threshold smaller 
than this value would miss at least one true positive frame in that same trajectory.
Then across trajectories, we collect one such score from each calibration trajectory and apply split conformal prediction to these held-out scores. The resulting threshold $\epsilon_{\mathrm{CP}}$ is therefore calibrated not to individual frames, but between different trajectories. As a result, for a new trajectory drawn from the same distribution, with probability at least $1-\alpha$,
the threshold is large enough to cover that trajectory’s hardest positive case, and hence will detect every ground-truth positive timestep in that trajectory. Here, $\alpha \in (0,1)$ is a user-specified error level.

\paragraph{Setup}
Let $\varsigma^1,\ldots,\varsigma^{n_{\mathrm{cal}}}$ be $n_{\mathrm{cal}}$ calibration demonstrations, disjoint from training,
and let $z_g$ denote the target latent embedding that semantically corresponds to a ground-truth specification $\omega$ (see Definition \ref{def:semantic_correspondence}). 
For each calibration demonstration $\varsigma^i$, define the nonconformity score

\[
score_i
= \max_{t \in [0, |\varsigma^i|]} \begin{cases}
d(z_t^{\!\left(i\right)}, z_g) & \mathrm{if }\; \delta^{GT}_{\omega}(s_t^{(i)})=1  \\
-\infty & \mathrm{otherwise.} \\
\end{cases}
\]
for a distance function $d \in D_\embeddings{}$ and where  $\delta^{GT}_{\omega}(s_t^{(i)})=1$ indicates that the ground-truth state at time $t$ satisfies predicate $\omega$. Intuitively, $score_i$ is the most difficult positive timestep in demonstration $\varsigma_i$: any threshold below $score_i$ would fail to classify at least one ground-truth positive frame in that demonstration. Each calibration  demonstration contains at least one ground-truth positive timestep.

\begin{definition}[Conformal Calibration Threshold]
For a calibration set of size $n_{\mathrm{cal}}$ and a target mis-recall level $\alpha \in (0,1)$, a \emph{conformal calibration threshold} $\epsilon_{\mathrm{CP}}$ is a value in $\mathbb{R}_{\ge 0}$ defined as follows:
\[
\epsilon_{\mathrm{CP}} = score_{(k)},
\qquad
k = \left\lceil (1-\alpha)(n_{\mathrm{cal}}+1)\right\rceil,
\]
where $score_{(k)}$ denotes the $k$-th smallest value among
$\{score_i\}_{i=1}^{n_{\mathrm{cal}}}$.
\end{definition}

\begin{theorem}[Conformal Recall Guarantee]
\label{thm:cp}
Assume the calibration demonstrations and a future test demonstration are exchangeable. Then
\[
\mathbb{P}\!\left(
\forall t:\;
\delta^{GT}_{\omega}(s_t^{(i)})=1
\;\Rightarrow\;
d(z_t,z_\omega)\le \epsilon_{\mathrm{CP}}
\right)
\ge 1-\alpha.
\]
Equivalently, with probability at least $1-\alpha$ over a newly drawn test demonstration, the embedding predicate achieves perfect per-demonstration recall, i.e., it detects every ground-truth positive timestep.
\end{theorem}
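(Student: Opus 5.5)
The plan is the standard split conformal quantile argument, applied to the per-demonstration nonconformity scores rather than to individual frames. For the test demonstration $\varsigma^{n_{\mathrm{cal}}+1}$ I will define $score_{n_{\mathrm{cal}}+1}$ exactly as for the calibration demonstrations: the maximum of $d(z_t,z_g)$ over ground-truth positive timesteps, with $z_\omega$ in the statement read as the target $z_g$. The first step is a deterministic reduction. The event in Theorem~\ref{thm:cp}, namely that every $t$ with $\delta^{GT}_{\omega}(s_t)=1$ has $d(z_t,z_g)\le\epsilon_{\mathrm{CP}}$, is equivalent to $score_{n_{\mathrm{cal}}+1}\le\epsilon_{\mathrm{CP}}$. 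The reason is that a maximum over a finite set is at most a value iff every element is. If the test demonstration has no positive timestep, its score is $-\infty$, the event holds vacuously, and the inequality is trivially true, so that case causes no trouble.

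The second step transfers exchangeability. The score is a fixed measurable function of a demonstration: it uses the fixed representation map $\eta$, the fixed target $z_g$, and the fixed ground-truth labeling. Applying the same function to each of exchangeable demonstrations yields exchangeable scores $score_1,\dots,score_{n_{\mathrm{cal}}+1}$. It matters here that $z_g$ and $d$ are chosen independently of the calibration data, which the setup assumes since targets come from reference observations.

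The third step is the rank argument. Write $S_i=score_i$ and $k=\lceil(1-\alpha)(n_{\mathrm{cal}}+1)\rceil$. If $k>n_{\mathrm{cal}}$, set $\epsilon_{\mathrm{CP}}=+\infty$ by the usual convention, and the claim is trivial. Otherwise, observe that $S_{n_{\mathrm{cal}}+1}\le S_{(k)}$, the $k$-th smallest of the calibration scores, iff $S_{n_{\mathrm{cal}}+1}$ is among the $k$ smallest of all $n_{\mathrm{cal}}+1$ scores. Ties are broken in favor of the test point, which only helps because the inequality is non-strict. By exchangeability, the rank of $S_{n_{\mathrm{cal}}+1}$ among the $n_{\mathrm{cal}}+1$ values is uniform on $\{1,\dots,n_{\mathrm{cal}}+1\}$ when there are no ties, and stochastically no larger when there are ties. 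Hence the probability of the event is at least $k/(n_{\mathrm{cal}}+1)\ge 1-\alpha$.

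The main obstacle is handling ties rigorously, including several scores equal to $-\infty$ and atoms in the distance distribution. I would handle this with the standard lemma: for exchangeable reals, $\mathbb{P}(S_{n+1}\le \text{the }\lceil(1-\alpha)(n+1)\rceil\text{-th smallest of }S_1,\dots,S_{n+1})\ge 1-\alpha$. This follows by counting, since at least $k$ of the $n+1$ indices satisfy $S_j\le$ that order statistic, and averaging over the symmetric positions gives the bound. Finally, replacing the order statistic of all $n_{\mathrm{cal}}+1$ scores by the one over the calibration scores alone is equivalent on this event, which completes the proof.
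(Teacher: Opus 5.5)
Your proposal is correct and follows the same route as the paper's proof sketch: reduce the recall event to $score_{\mathrm{test}}\le\epsilon_{\mathrm{CP}}$, use exchangeability of the per-demonstration scores, and apply the split conformal order-statistic bound $k/(n_{\mathrm{cal}}+1)\ge 1-\alpha$. Your treatment of ties, the $-\infty$ case, the $k>n_{\mathrm{cal}}$ convention, and the equivalence between the calibration-only and augmented order statistics spells out what the paper's sketch leaves to the ``standard'' argument.
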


\begin{proof}[Proof sketch]
By exchangeability, the augmented set of scores
\[
score_1,\ldots,score_{n_{\mathrm{cal}}}, score_{\mathrm{test}}
\]
is exchangeable, where
\[
score_{\mathrm{test}}
=
\max_{\substack{t \in \varsigma_{\mathrm{test}}\\ \delta^{GT}_{\omega}(s_t)=1}}
d\!\left(z_t, z_\omega\right).
\]
The standard split conformal order-statistic argument implies
\[
\mathbb{P}\!\left(score_{\mathrm{test}} \le score_{(k)}\right)
\ge
\frac{k}{n_{\mathrm{cal}}+1}
\ge
1-\alpha.
\]
By construction, the event $score_{\mathrm{test}} \le \epsilon_{\mathrm{CP}}$ is exactly the event that every positive timestep in the test demonstration satisfies $d(z_t,z_\omega)\le \epsilon_{\mathrm{CP}}$, which proves the recall guarantee.
\end{proof}

At deployment, the predicate is evaluated solely by checking whether
\[
d(z_t, z_\omega) \le \epsilon_{\mathrm{CP}},
\]
 The guarantee holds for any embedding function and any underlying data distribution, provided the calibration and test demonstrations remain exchangeable. 

\section{Additional Evaluation Details for Dubins Car}

\subsection{Details for Dubins Car Setup}\label{sec:appendix_dubins_setup}
The navigation task is defined for a robot that respects discrete-time Dubins car dynamics. A state is defined as $s = [p^x, p^y, \theta]$ where $s_{t+1} = s_t + \Delta t [v \cos(\theta_t), v\sin(\theta_t),a_t]$ with continuous angular velocity $a_t \in A = [-a_{\max}, a_{\max}]$.
We fix $a_{\max} = 1.25$ rad/s and $v = 1$ m/s, with discretization at $\Delta t = 0.05 s$.
We generate $N=100$ trajectories using a feedback controller with obstacle avoidance. 
For each task, we construct pairs of equivalent specifications in the ground-truth state space and the embedding space.
The ground-truth specifications, $\omega_i$, are defined over the state space of the car (e.g., the position $p^x, p^y$, angular velocity $y$, etc.).
In parallel, the ETL specifications, $\varphi_i$, are defined over the embedded images observed during simulation; target embeddings are obtained from the simulated observations corresponding to selected goal states.
We utilize the encoder from the world model used in \citet{anysafe}, which is based on Dreamer \citep{Hafner2025} with a Recurrent State Space Model (RSSM). The encoder produces a task-relevant representation space in which distances reflect semantic similarity between observations. 

We evaluate four specification patterns: \emph{Reach} ($\eventually A$), \emph{Avoid} ($\always \neg C$), \emph{Reach-Avoid} ($\eventually A \land \always \neg C$), and \emph{Sequential} ($\eventually(A \land \eventually B)$). $A$, $B$, and $C$ denote reaching the top-right goal $|p_t - (0.8, 0.8)| < 0.25$, reaching the top-left goal $|p_t - (-0.8, 0.8)| < 0.25$, and entering the obstacle proximity zone $|p_t - (0.8, -0.8)| < 0.5$, respectively. This setup enables direct comparison between embedding-based satisfaction and ground-truth logical semantics. Thresholds are calibrated on a 40/60 calibration/test split using both F1-optimal and conformal  procedures. In all experiments, we use $\alpha = 0.10$ for computing $\epsilon_{cp}$.

\subsection{Results for Dubins Car}\label{sec:appendix_dubins}

The results for the experiments with the Dubins car can be found in Table \ref{tab:dubins}, including F1, Precision, Recall, and Agreement scores for both F1-Optimal and Conformal Prediction thresholds.

\begin{table}[t]
\centering
\tabletextsize
\caption{ETL predicate evaluation on the Dubins car.
}
\label{tab:dubins}
\begin{tabular}{llcccc|ccc}
\toprule
 & & \multicolumn{4}{c|}{$\epsilon^*$ (F1-optimal)} & \multicolumn{3}{c}{$\epsilon_{CP}$ ($\alpha{=}0.10$)} \\
Spec & Scope & F1 & Prec. & Rec. & Agree & Prec. & Rec. & Agree \\
\midrule
Reach $A$ & frames & 0.85 & 0.87 & 0.83 & 98.6\% & 0.79 & 0.93 & 98.5\% \\
Avoid $C$ & frames & 0.80 & 0.69 & 0.94 & 96.5\% & 0.70 & 0.92 & 96.5\% \\
RA \ $A\wedge\lnot C$ & frames & 0.85 & 0.87 & 0.83 & 98.6\% & 0.79 & 0.93 & 98.5\% \\
Seq.\ $A \to B$ & episodes & 1.00 & 1.00 & 1.00 & 100\% & 1.00 & 1.00 & 100\% \\
\bottomrule
\end{tabular}%
\end{table}

\section{Additional Details for Manipulation Tasks}
\label{sec:appendix_manip}

We provide qualitative visualizations of ETL predicate behavior on manipulation tasks. These plots illustrate how embedding distances evolve over time and how predicate satisfaction aligns with ground-truth signals.

\subsection{Tabularized Results}\label{appx:tab:manip}
We present tabularized results from the experiments from Section~\ref{sec:manip} in Table~\ref{tab:manip} and Table~\ref{tab:f1_three_envs}.

\begin{table}[t]
\centering
\tabletextsize
\caption{ETL predicate evaluation on  
MetaWorld (test split). Frame-level metrics are reported for all predicates.
}
\label{tab:manip}
\begin{tabular}{
lcccc|cc}
\toprule
Task & F1 & Prec. & Rec. & Agree & CP Rec. & CP Prec. \\
\midrule
Grasp ($A$) & 0.985 & 0.971 & 0.999 & 97.3\% & 0.883 & 0.976 \\
Place ($B$) & 0.948 & 0.901 & 1.000 & 91.4\% & 0.916 & 0.926 \\
\midrule
\multicolumn{2}{l}{Sequential}
& \multicolumn{4}{c}{\textbf{100\% ordering agreement (24/24)}} \\
\bottomrule
\end{tabular}
\end{table}
\begin{table}[t]
\centering
\caption{F1 score for failure / predicate detection across three environments. All methods use the F1-optimal threshold; ETL (CP) additionally reports the class-conditional split-CP threshold. Per-column best in \textbf{bold}.}
\label{tab:f1_three_envs}
\begin{tabular}{lcccc}
\toprule
Method & \textsc{Stacking} & \textsc{Sorting} & \textsc{MW-Pick-Place} & Avg \\
\midrule
logpZO & 0.844 & 0.534 & \textbf{0.992} & 0.790 \\
PCA-kmeans & 0.852 & 0.585 & 0.815 & 0.751 \\
ETL (best F1) & \textbf{0.897} & \textbf{0.593} & 0.961 & \textbf{0.817} \\
ETL (CP) & 0.887 & 0.556 & 0.923 & 0.789 \\
\bottomrule
\end{tabular}
\end{table}

\subsection{Sequential Predicate Trace}
\label{sec:appendix_metaworld}

Figure~\ref{fig:mw_seq_timelines} illustrates dual-predicate traces for the pick-place-wall task. The plots show distances to both subgoal embeddings ($z_A$ for grasp and $z_B$ for place), along with corresponding predicate activations.

\begin{figure}[t]
  \centering
  \includegraphics[width=0.9\textwidth]{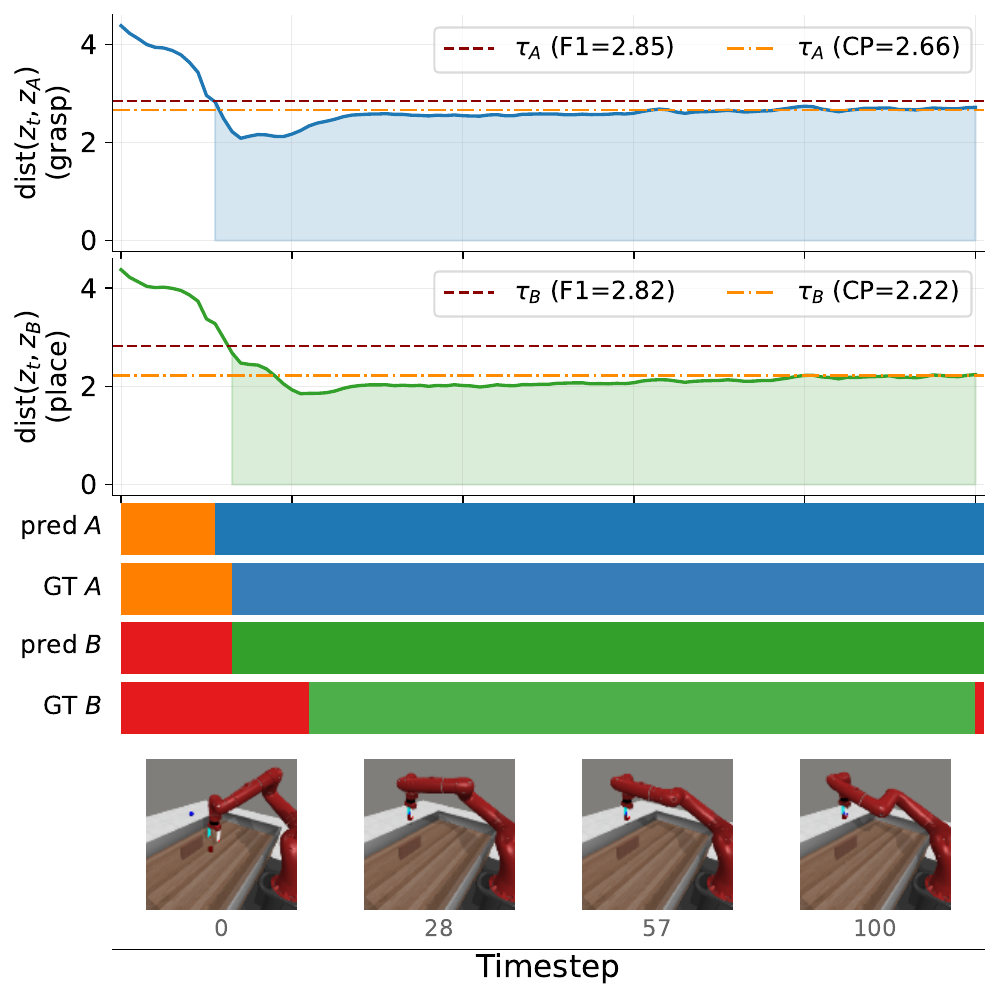}
  \caption{Dual-predicate Boolean timelines for \texttt{mw-pick-place-wall}. Top panels show distances to grasp ($z_A$) and place ($z_B$) embeddings with thresholds $\epsilon^*$ and $\epsilon_{CP}$. Lower panels show ETL and ground-truth predicate activations. }
  \label{fig:mw_seq_timelines}
\end{figure}

The embedding distances to $z_A$ and $z_B$ decrease at distinct timesteps corresponding to grasp and placement events. This temporal separation is preserved in the embedding space, allowing ETL to correctly identify the ordering of subgoals across all trajectories.

\section{Additional DROID Evaluation Details}
\label{app:droid_full}

\subsection{Tabularized Results}\label{sec:appendix_droid}
We present tabularized results from the experiments from Section~\ref{sec:eval:droid} in Table~\ref{tab:droid_phasic}.

\begin{table}
\centering
\tabletextsize
\caption{Comparison of ETL and Qwen2-VL-2B on phasic DROID episodes.}
\label{tab:droid_phasic}
\setlength{\tabcolsep}{2.8pt}
\begin{tabular}{l c ccc ccc}
\toprule
& & \multicolumn{3}{c}{\textbf{ETL}} & \multicolumn{3}{c}{\textbf{Qwen2-VL-2B}} \\
\cmidrule(lr){3-5} \cmidrule(lr){6-8}
\textbf{Task} & \textbf{Ph.} & \textbf{F1} & \textbf{Agr.} & \textbf{Seq.} & \textbf{F1} & \textbf{Agr.} & \textbf{Seq.} \\
\midrule
Obj. $\to$ paper      & 2 & 0.968 & 0.977 & \cmark & 0.493 & 0.383 & \cmark \\
Tap $\to$ spoon       & 2 & 0.839 & 0.937 & \xmark & 0.292 & 0.700 & \cmark \\
4-way chain           & 4 & 0.531 & 0.862 & \cmark & 0.585 & 0.811 & \xmark \\
White $\to$ blue cloth& 3 & 0.884 & 0.959 & \cmark & 0.335 & 0.258 & \xmark \\
Bottles $\to$ stove   & 3 & 0.844 & 0.966 & \cmark & 0.244 & 0.685 & \xmark \\
\midrule
\textbf{Mean} & -- & \textbf{0.813} & \textbf{0.940} & \textbf{4/5} & \textbf{0.390} & \textbf{0.567} & \textbf{2/5} \\
\bottomrule
\end{tabular}
\end{table}

\subsection{Ground Truth Construction}

Unlike simulation environments, DROID does not provide explicit task-phase annotations. We derive ground-truth predicates directly from proprioceptive signals:
\begin{align*}
  \pi_{\text{hold}}(t)    &: \text{gripper}(t) > 0.5 \\
  \pi_{\text{release}}(t) &: \text{gripper}(t) < 0.15 \;\wedge\; \exists\,t' < t,\, \pi_{\text{hold}}(t')
\end{align*}
The sequential specification is defined as:
\[
\exists\,t_1 < t_2 : \pi_{\text{hold}}(t_1) \wedge \pi_{\text{release}}(t_2),
\]
corresponding to a pick-and-place interaction. For tasks where proprioception is not sufficient to identify phase of tasks, we manually annotate the videos and generate the ground truth predicates.

\subsection{Encoder and Representation}

We use the \textbf{SVD VAE} from Ctrl-World \citep{ctrlworld}, a video diffusion model trained directly on DROID data. Each frame is encoded into a $4 \times 24 \times 40$ latent tensor, flattened to a 3{,}840-dimensional vector. Distances are computed using cosine similarity. This encoder is domain-matched to DROID and provides stronger geometric separation than general-purpose encoders such as DINOv2.

\subsection{Sequential Evaluation Details}

We identify 25 episodes containing valid grasp-then-release transitions, using 10 for calibration and 15 for evaluation.

\begin{table}[h]
\centering
\tabletextsize
\caption{Sequential predicate evaluation on DROID (SVD VAE, wrist camera).}
\label{tab:droid_seq_app}
\begin{tabular}{lccccc}
\toprule
Predicate & F1 & Precision & Recall & Agreement & Seq.\ Agreement \\
\midrule
$\pi_{\text{hold}}$   & 0.666 & 0.721 & 0.750 & 0.701 & \multirow{2}{*}{\textbf{1.000}} \\
$\pi_{\text{release}}$ & 0.258 & 0.162 & 1.000 & 0.163 & \\
\bottomrule
\end{tabular}
\end{table}

The release predicate exhibits low precision due to visual ambiguity: the approach phase with an open gripper is visually similar to the post-release phase. Despite this, sequential ordering is correctly identified in all episodes.
\subsection{Discussion of Failure Modes}
\label{sec:appendix_manip_failures}

ETL performance depends on the separability of task-relevant concepts in the embedding space. For manipulation tasks, success states (e.g., button press, object placement) are highly distinctive, leading to sharp transitions in embedding distance and high predicate accuracy.

Failure modes arise when embeddings corresponding to different semantic states overlap in the latent space, which can result in delayed or premature predicate activation near decision boundaries. However, such cases are rare in these tasks, and conformal thresholds help mitigate missed detections by prioritizing recall.

Overall, these qualitative results reinforce that embedding distances provide a reliable and interpretable signal for detecting semantic events and temporal structure in manipulation tasks.

We evaluate whether \emph{Embedding Temporal Logic} (ETL) monitoring can serve
as a competitive, interpretable alternative to state of the art failure
predictors for generative robot policies.  Our central hypothesis is that
explicitly encoding the \emph{sequential milestone structure} of a task into a
small number of latent spec anchors, and then asking whether each anchor was
ever approached during execution, yields a sharper failure signal than methods
that either model the full observation distribution or score each timestep
independently against a single goal embedding.

\end{document}